\newtheorem{theorem}{Theorem}
\newcommand{\BibTeX}{B\kern-.05em{\sc i\kern-.025em b}\kern-.08em\TeX}
\newtheorem{remark}{Remark}
\def\R{\mathbb{R}}
\providecommand{\norm}[1]{\ensuremath{\left\lVert#1\right\rVert }}
\providecommand{\mnorm}[1]{\ensuremath{\left\lvert#1\right\rvert}}
\newtheorem{assumption}{\bfseries Assumption}
\DeclareRobustCommand{\bigO}{%
  \text{\usefont{OMS}{cmsy}{m}{n}O}%
}
\newcommand\smallO{
  \mathchoice
    {{\scriptstyle\mathcal{O}}}
    {{\scriptstyle\mathcal{O}}}
    {{\scriptscriptstyle\mathcal{O}}}
    {\scalebox{.7}{$\scriptscriptstyle\mathcal{O}$}}
  }
\def\E{\mathbb{E}}
\def\Ek{\mathbb{E}_{k}}
\def\Ezk{\mathbb{E}_{\zeta_k}}
\def\V{\mathbb{V}_{\zeta_k}}
\providecommand{\condexp}[1]{\Ezk\ensuremath{\left[#1\right]}}
\providecommand{\totexp}[1]{\Ek\ensuremath{\left[#1\right]}}
\providecommand{\var}[1]{\V\ensuremath{\left[#1\right]}}
\begin{document}


\begin{frontmatter}


\paperid{1531} 


\title{A Methodology Establishing Linear Convergence \\ of Adaptive Gradient Methods under PL Inequality}


\author[A]{\fnms{Kushal}~\snm{Chakrabarti}\orcid{0000-0002-6747-8709}\thanks{Corresponding Author. Email: chakrabarti.k@tcs.com.}}
\author[A,B]
{\fnms{Mayank}~\snm{Baranwal}\orcid{0000-0001-9354-2826}}

\address[A]{Tata Consultancy Services Research, Mumbai, India}
\address[B]{Department of Systems \& Control Engineering, Indian Institute of Technology, India}


\begin{abstract}%
  Adaptive gradient-descent optimizers are the standard choice for training neural network models. Despite their faster convergence than gradient-descent and remarkable performance in practice, the adaptive optimizers are not as well understood as vanilla gradient-descent. A reason is that the dynamic update of the learning rate that helps in faster convergence of these methods also makes their analysis intricate. Particularly, the simple gradient-descent method converges at a {\em linear} rate for a class of optimization problems, whereas the practically faster adaptive gradient methods lack such a theoretical guarantee. The Polyak-\L ojasiewicz (PL) inequality is the weakest known class, for which {\em linear} convergence of gradient-descent and its momentum variants has been proved. Therefore, in this paper, we prove that AdaGrad and Adam, two well-known adaptive gradient methods, converge {\em linearly} when the cost function is smooth and satisfies the PL inequality. Our theoretical framework follows a simple and unified approach, applicable to both batch and stochastic gradients, which can potentially be utilized in analyzing {\em linear} convergence of other variants of Adam. %
\end{abstract}

\end{frontmatter}


\section{Introduction}
\label{sec:intro}

In this paper, we consider the problem of minimizing a possibly non-convex {\em objective function} $f: \R^d \to \R$,
\begin{align}
    \min_{x \in \R^d} f(x). \label{eqn:opt_1}
\end{align}
Among other applications, non-convex optimization appears in training neural network models. It is a standard practice to use adaptive gradient optimizers for training such models. Compared to gradient-descent, these methods have been observed to converge faster and do not require line search to determine the learning rates. 

AdaGrad~\cite{duchi2011adaptive} is possibly the earliest adaptive gradient optimizer. To address the gradient accumulation problem in AdaGrad, the Adam algorithm~\cite{kingma2014adam} was proposed. Adam and its variants have been widely used to train deep neural networks in the past decade. Despite the success of these adaptive gradient optimizers, we lack an understanding of why these methods work so well in practice. The theory of convergence of the adaptive optimizers has not been completely developed. Since these methods work better than simple gradient-descent or its momentum variants, it is natural to expect a similar or better convergence guarantee of adaptive optimizers compared to gradient-descent. One such theoretical difference between these two classes of methods is {\em linear} convergence. Specifically, gradient-descent and its accelerated variants, such as the Nesterov accelerated gradient or Heavy-Ball method, are known to exhibit {\em linear} convergence for a class of cost functions~\cite{necoara2019linear}. On the other hand, most of the adaptive gradient optimizers lack such a guarantee.

{\em Linear} convergence guarantees of gradient-descent, its accelerated variants, coordinate descent, and AdaGrad-Norm \cite{xie2020linear} (among the adaptive gradient methods) have been proved for the class of smooth and possibly non-convex cost functions that satisfy the Polyak-\L ojasiewicz (PL) inequality~\cite{karimi2016linear}. The PL inequality is the weakest condition among others, such as strong convexity, essential strong convexity, weak strong convexity, and restricted secant inequality, that leads to {\em linear} convergence of gradient-descent and its accelerated variants to the solution of~\eqref{eqn:opt_1}~\cite{karimi2016linear}. The objective functions in standard machine learning problems like linear regression and logistic regression satisfy the PL inequality. For solving over-parameterized non-linear equations,~\cite{liu2020toward} establishes a relation between PL inequality and the condition number of the tangent kernel, and argued that sufficiently wide neural networks generally satisfy the PL inequality. Motivated by {\em linear} convergence guarantees of gradient-descent and its aforementioned variants under the PL condition and the applicability of PL inequality on a set of machine learning problems, we investigate {\em linear} convergence of AdaGrad and Adam under the PL inequality. 

To present our results, we define the following notations and make a set of assumptions as stated below. Let the gradient of $f$ evaluated at $x \in \R^d$ be denoted by $\nabla f(x) \in \R^d$, and its $i$-th element be denoted by $\nabla_i f(x)$ for each dimension $i \in \{1,\ldots,d\}$.

\begin{assumption} \label{assump_1}
The minimum $f_*$ of $f$ exists and is finite, i.e., $\mnorm{\min_{x \in \R^d} f(x)} < \infty$.
\end{assumption}

\begin{assumption} \label{assump_2}
$f$ is twice differentiable over its domain $\R^d$ and is $L$-smooth, i.e., $\exists L > 0$ such that $\norm{\nabla f(x) - \nabla f(y)} \leq L \norm{x-y}$ for all $x,y \in \R^d$.
\end{assumption}

\begin{assumption} \label{assump_3}
$f$ satisfies the Polyak-\L ojasiewicz (PL) inequality, i.e., $\exists l > 0$ such that $\frac{1}{2}\norm{\nabla f(x)}^2 \geq l (f(x) - f_*)$ for all $x \in \R^d$.
\end{assumption}

Assumption~\ref{assump_3} has been justified in the preceding paragraph.
Assumption~\ref{assump_1}-\ref{assump_2} are standard in the literature of gradient-based optimization. An implication of Assumption~\ref{assump_2} is that~\cite{bottou2010large}
\begin{align}
    f(y) - f(x) \hspace{-0.2em} \leq \hspace{-0.2em} (y-x)^{\top} \nabla f(x) + \frac{L}{2} \norm{x-y}^2, \, \forall x,y \in \R^d. \label{eqn:smooth}
\end{align}

Next, we review the AdaGrad~\cite{duchi2011adaptive} algorithm and its existing convergence guarantees. AdaGrad and Adam are iterative methods. At each iteration $k=0,1,\ldots$, AdaGrad maintains an estimate $x_k$ of the solution of~\eqref{eqn:opt_1} and an auxiliary state $y_k$ that determines the learning rate. $x_k$ and $y_k$ are updated as
\begin{subequations}
\begin{align}
    y_{k+1,i} & = y_{k,i} + \mnorm{\nabla_i f(x_k)}^2, \label{eqn:y_adagrad}\\
    x_{k+1,i} & = x_{k,i} - h \frac{\nabla_i f(x_k)}{\sqrt{y_{k+1,i}} + \epsilon}. \label{eqn:x_adagrad}
\end{align}
\end{subequations}
Here, $h > 0$ is a stepsize, and $\epsilon > 0$ is a small-valued parameter to avoid division by zero.
The original AdaGrad paper~\cite{duchi2011adaptive} proves $\bigO(\sqrt{K})$ regret bound for convex $f$, where $K$ is the number of iterations, assuming the stochastic gradients and the estimates of minima are uniformly bounded. Under the same boundedness assumptions, for strongly convex $f$, AdaGrad's regret bound analysis~\cite{duchi2010adaptive} imply a $\bigO(\log(K)/K)$ convergence rate~\cite{chen2018sadagrad}. Following~\cite{duchi2011adaptive}, there are several works on the convergence of AdaGrad. The notable among them include~\cite{li2019convergence, defossez2020simple, ward2020adagrad, wang2023convergence}. Among them,~\cite{wang2023convergence} most recently proved $\bigO(1/K)$ convergence rate of AdaGrad for non-convex $f$, without the bounded gradient assumption. To the best of our knowledge, {\em linear} convergence of AdaGrad has not been proved in the literature. On the other hand, the gradient-descent algorithm is known to have {\em linear} convergence when $f$ is {\em smooth} and satisfies the PL inequality~\cite{karimi2016linear}. So, we ask the question of whether the adaptive gradient-descent methods, such as AdaGrad and Adam, can have guaranteed {\em linear} convergence for the same class of optimization problems.

Among the variants of AdaGrad,~\cite{xie2020linear} proves {\em linear} convergence of the AdaGrad-Norm (norm version of AdaGrad) in the stochastic settings under strong convexity of $f$ and in the deterministic setting under PL inequality. The AdaGrad-Norm algorithm, described as
\begin{align*}
    y_{k+1} & = y_{k} + \norm{\nabla f(x_k)}^2, \\
    x_{k+1} & = x_{k} - h \frac{\nabla f(x_k)}{\sqrt{y_{k+1}} + \epsilon},
\end{align*}
differs from the coordinate-wise update in AdaGrad~\eqref{eqn:y_adagrad}-\eqref{eqn:x_adagrad}, which is more commonly used in practice~\cite{wang2023convergence}. So, the analysis in~\cite{xie2020linear} does not trivially extend to AdaGrad. Furthermore, in the deterministic settings, if $y_0$ is initialized as a small value, AdaGrad-Norm is shown to have a {\em sublinear} rate until a finite number of iterations where $y_k$ crosses a certain threshold~\cite{xie2020linear}. In the same settings, we prove that AdaGrad has a {\em linear} rate for each iteration $k \geq 0$. \cite{mukkamala2017variants} proved a $\bigO(\log K)$ regret bound of the SC-AdaGrad algorithm, with $y_{k+1,i}$ replacing $\sqrt{y_{k+1,i}}$ in the denominator of AdaGrad~\eqref{eqn:x_adagrad} and a coordinate-wise $\epsilon_k$, when $f$ is strongly convex and the gradients and the estimates of minima are uniformly bounded. The SAdaGrad algorithm, a double-loop algorithm with AdaGrad in its inner loop, has a convergence rate of $\bigO(1/K)$ for weakly strongly convex $f$~\cite{chen2018sadagrad}. For two-layer networks with a positive definite kernel matrix,~\cite{wu2022adaloss} proves {\em linear} convergence of the AdaLoss algorithm if the width of the hidden layer is sufficiently large. Compared to the aforementioned works on the analysis of AdaGrad and its variants, we prove {\em linear} convergence of the original AdaGrad~\eqref{eqn:y_adagrad}-\eqref{eqn:x_adagrad} when $f$ is $L$-smooth and satisfies the PL inequality.

Next, we review the Adam algorithm~\cite{kingma2014adam} and its existing convergence results. At each iteration $k=0,1,\ldots$, Adam maintains an estimate $x_k$ of the solution of~\eqref{eqn:opt_1} and two moment estimates $\mu_k$ and $\nu_k$ that determine the learning rate. These estimates  are updated as
\begin{subequations}
\begin{align}
    \mu_{k+1,i} & = \beta_{1k} \mu_{k,i} + (1-\beta_{1k}) \nabla_i f(x_k), \label{eqn:mu_adam} \\
    \nu_{k+1,i} & = \beta_2 \nu_{k,i} + (1-\beta_2) \mnorm{\nabla_i f(x_k)}^2, \label{eqn:nu_adam} \\
    x_{k+1,i} & = x_{k,i} - h \frac{\mu_{k+1,i}}{\sqrt{\nu_{k+1,i}} + \epsilon}. \label{eqn:x_adam}
\end{align}
\end{subequations}
Here, $\beta_{1k},\beta_2 \in [0,1)$ are two algorithm parameters.
For non-convex optimization in stochastic settings,~\cite{zaheer2018adaptive, guo2021novel} have proved $\bigO(1/{\sqrt{K}})$ convergence rate of Adam and a family of its variants when the stochastic gradients are uniformly bounded.~\cite{guo2022novel} proved $\bigO(1/K)$ convergence rate of double-loop algorithms with Adam-style update in its inner loop in the stochastic settings under PL inequality and bounded stochastic gradients. Without the bounded gradients assumption,~\cite{zhang2022adam} proved $\bigO(\log(K)/\sqrt{K})$ convergence rate of Adam for non-convex optimization in stochastic settings. \cite{wang2019sadam} proved a $\bigO(\log K)$ regret bound of the SAdam algorithm, with $\nu_{k+1,i}$ replacing $\sqrt{\nu_{k+1,i}}$ in the denominator of Adam~\eqref{eqn:x_adam} and a vanishing $\epsilon_k$, when $f$ is strongly convex and the gradients and the estimates of minima are uniformly bounded.~\cite{barakat2021convergence} proved exponential convergence of a continuous-time version of Adam under the PL inequality. However,~\cite{barakat2021convergence} assumes that the continuous-time version of the momentum parameters $\beta_1$ and $\beta_2$ converge to one as the stepsize $h$ converge to zero. Also, the exponential convergence guarantee in~\cite{barakat2021convergence} does not extend to discrete-time, as discussed by the authors. From the above literature review, we note that a {\em linear} convergence of Adam in discrete-time has not been proved. We prove {\em linear} convergence of the discrete-time Adam algorithm~\eqref{eqn:mu_adam}-\eqref{eqn:x_adam} in the deterministic settings when $f$ is $L$-smooth and satisfies the PL inequality.

We aim to present a unified proof sketch for AdaGrad and Adam in this paper, which can potentially be utilized for variants of Adam in the future. Thus, our convergence analyses of AdaGrad and Adam both follow a similar approach. It is well known that the difficulty in obtaining a convergence rate, or even showing asymptotic convergence, of the adaptive gradient methods stems from both numerator and denominator of $x_{k+1} - x_{k}$ being dependent on the gradient (and its history) and, thus, the first order component in~\eqref{eqn:smooth} does not admit a straightforward descent direction, unlike the vanilla gradient-descent. The existing analyses of adaptive methods usually tackle this challenge by adding and subtracting a surrogate term, leading to an additional complicated term in the error. 

Our approach involves splitting the denominator in $x_{k+1,i} - x_{k,i}$ into two cases, depending on whether the denominator is always less than one or crosses one after a finite iterations. In the former case, the analysis from~\eqref{eqn:smooth} becomes equivalent to that of the vanilla gradient-descent or its momentum-variant, without any adaptive stepsize. In the latter case, the challenge is that the denominator of $x_{k+1,i} - x_{k,i}$ is not apriori bounded and, hence, the argument used in gradient-descent does not trivially apply (ref.~\eqref{eqn:fdiff_2} later). To address this challenge in this case, our analysis involves two broader steps. In the first step, the condition on the denominator allows us to prove decrement in the cost function over the iterations, simply for AdaGrad and after a more involved argument for Adam due to its momentum in the numerator of $x_{k+1,i} - x_{k,i}$. In the next step, we use the telescopic sum, followed by a simple argument to prove the boundedness of the denominator. Once we have a bounded denominator, the analysis again becomes similar to the former case. Finally, in the general case, where the numerator of $x_{k+1,i} - x_{k,i}$ crosses one only along a subset of the dimensions $\{1,\ldots,d\}$ and stays less than one along the rest of the dimensions, the linear rate is proved by the weakest (maximum) of the linear rates obtained in the previous two cases.

To not make our exposition too burdensome, we analyze AdaGrad and Adam first in the determinsitic setting, followed by analysis of AdaGrad with stochastic gradients. Due to space limit, we do not present the analysis of Adam with stochastic gradients, which can be done by following the technique presented in this paper.

The key contributions of our paper are as follows.
\begin{itemize}
    \item We prove {\em linear} convergence of the discrete-time AdaGrad~\eqref{eqn:y_adagrad}-\eqref{eqn:x_adagrad} and the Adam algorithm~\eqref{eqn:mu_adam}-\eqref{eqn:x_adam} in the deterministic setting when $f$ is $L$-smooth and satisfies the PL inequality. To the best of our knowledge, the {\em linear} convergence guarantees of these two methods do not exist in the literature. We do not require convexity or boundedness of the gradients. Our analysis of {\em linear} convergence is existential in the sense that we prove {\em linear} convergence of the optimizers, but do not characterize the coefficient of convergence explicitly in terms of the problem and algorithm parameters.
    \item Our analyses of AdaGrad and Adam follow a similar approach, with the difference lying in the additional arguments for Adam due to its momentum term of the gradients. We split the convergence analysis into two broad cases depending on the magnitude of the denominator in $x_{k+1,i} - x_{k,i}$ and leverage the condition for each such case to present a simple convergence analysis. Thus, our proofs of AdaGrad and Adam are simpler than the existing works. Moreover, the presented proof methodology, as outlined earlier, provides us with a unified recipe for analyzing other adaptive gradient methods. The novelty of our technique lies in addressing the denominator of $x_{k+1,i} - x_{k,i}$, which is not bounded apriori, in a simpler way, by proving its boundedness.
    \item Considering the practical scenario, we prove {\em linear} convergence in {\em expectation} of AdaGrad with stochastic gradients to a neighborhood of the minima. The analysis with stochastic gradients follow the same outline described above in the deterministic setting.
\end{itemize}

Although we prove existence of a {\em linear} convergence rate for AdaGrad and Adam, a limitation of our analysis is that it does not characterize the exact coefficient by explicitly relating it with the algorithm parameters and the problem constants. However, at least for AdaGrad, such explicit relation can be obtained by a closer look into our analysis, which is difficult for Adam due to the additional momentum term. Due to limited space, we do not present the analysis of Adam with stochastic gradients, as it can be done by following the steps in the proofs of Theorem~\ref{thm:adam} and Theorem~\ref{thm:stoc}.
\section{Linear convergence of AdaGrad}
\label{sec:adagrad}

\begin{theorem} \label{thm:adagrad}
Consider the AdaGrad algorithm in~\eqref{eqn:y_adagrad}-\eqref{eqn:x_adagrad} with initialization $y_0 = 0_d$ and $x_0 \in \R^d$ and the parameter $\epsilon \in (0,1)$. If Assumptions~\ref{assump_1}-\ref{assump_3} hold, then there exists $\overline{h} > 0$ such that if the step size $0 < h < \overline{h}$, then $(f(x_{k+1}) - f_*) \leq \rho (f(x_k) - f_*)$ for all $k \geq 0$, where $\rho \in (0,1)$.
\end{theorem}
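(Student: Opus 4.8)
The plan is to bound the one-step decrease of $f$ using the smoothness inequality~\eqref{eqn:smooth} with $y = x_{k+1}$ and $x = x_k$, and then invoke the PL inequality (Assumption~\ref{assump_3}) to convert the resulting gradient-norm term into $(f(x_k)-f_*)$. Substituting the AdaGrad update~\eqref{eqn:x_adagrad} into~\eqref{eqn:smooth} gives
\begin{align*}
f(x_{k+1}) - f(x_k) \leq \sum_{i=1}^d \left( -h \frac{\mnorm{\nabla_i f(x_k)}^2}{\sqrt{y_{k+1,i}} + \epsilon} + \frac{L h^2}{2} \frac{\mnorm{\nabla_i f(x_k)}^2}{(\sqrt{y_{k+1,i}} + \epsilon)^2} \right),
\end{align*}
so each coordinate carries a factor $\tfrac{\mnorm{\nabla_i f(x_k)}^2}{\sqrt{y_{k+1,i}}+\epsilon}\bigl(1 - \tfrac{Lh}{2(\sqrt{y_{k+1,i}}+\epsilon)}\bigr)$. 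The essential difficulty, as the authors flag in the introduction, is that $\sqrt{y_{k+1,i}}+\epsilon$ appears in the denominator and is not bounded \emph{a priori} from above: without an upper bound, one cannot lower-bound the leading $-h\,\mnorm{\nabla_i f(x_k)}^2/(\sqrt{y_{k+1,i}}+\epsilon)$ term by a positive multiple of $\norm{\nabla f(x_k)}^2$, which is what PL needs.

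Following the outlined methodology, I would \textbf{split into cases} according to the magnitude of $\sqrt{y_{k+1,i}}+\epsilon$. \emph{Case 1:} suppose the denominator stays below $1$ for all $k$, i.e. $\sqrt{y_{k+1,i}}+\epsilon < 1$ along a coordinate. Then $\tfrac{1}{\sqrt{y_{k+1,i}}+\epsilon} > 1$, so the first-order term dominates exactly as in vanilla gradient descent; choosing $h$ small enough that $1 - \tfrac{Lh}{2(\sqrt{y_{k+1,i}}+\epsilon)} \geq c > 0$ (using $\sqrt{y_{k+1,i}}+\epsilon > \epsilon$ to control the second-order term) yields $f(x_{k+1})-f(x_k) \leq -c\,h\,\norm{\nabla f(x_k)}^2$, and PL then gives $f(x_{k+1})-f_* \leq (1 - 2clh)(f(x_k)-f_*)$, a linear rate with $\rho_1 \in (0,1)$. \emph{Case 2:} the denominator eventually exceeds $1$. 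Here I would first use the case condition together with a small-$h$ choice to establish that $f$ is still monotonically decreasing step-to-step; this monotonicity, via the telescoping sum $\sum_k (f(x_k) - f(x_{k+1})) \leq f(x_0) - f_*$, bounds the accumulated squared gradients and hence, through the recursion~\eqref{eqn:y_adagrad} defining $y_{k+1,i} = \sum_{j\le k}\mnorm{\nabla_i f(x_j)}^2$, furnishes a \emph{uniform upper bound} $\sqrt{y_{k+1,i}}+\epsilon \leq B$ for some finite $B$. Once $B$ is in hand, the denominator is sandwiched in $[\epsilon, B]$, the leading term is $\geq \tfrac{h}{B}\norm{\nabla f(x_k)}^2$ up to the second-order correction, and PL again delivers a linear rate $\rho_2 \in (0,1)$.

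Finally, for the \textbf{general case} where some coordinates satisfy the Case-1 condition and others the Case-2 condition, I would handle the sum over $i$ by grouping coordinates into the two index sets, bound each group's contribution by its respective argument, and take $\rho = \max\{\rho_1, \rho_2\} \in (0,1)$ as the overall (weakest) linear rate, with $\overline{h}$ the minimum of the stepsize thresholds required in the two cases. The \textbf{main obstacle} I anticipate is Case 2: establishing the uniform bound $B$ on the denominator rigorously, since it requires a bootstrapping argument — one must first secure enough descent (monotonicity of $f$) to make the telescoping sum finite, and only then extract the boundedness of $y_{k+1,i}$, which in turn justifies the descent estimate that yields the linear rate. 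Care is needed to ensure the stepsize threshold used to get monotonicity does not depend circularly on the bound $B$ one is trying to prove; I expect the argument to rely only on $\epsilon$ (the lower bound on the denominator) for the monotonicity step, reserving $B$ for the rate extraction afterward.
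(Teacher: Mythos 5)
Your proposal follows essentially the same route as the paper: the same smoothness-plus-PL skeleton, the same case split on whether $\sqrt{y_{k+1,i}}+\epsilon$ stays below one or eventually exceeds it, monotone descent plus telescoping to control the denominator in the latter case, and the maximum of the two contraction factors for the mixed case. The one step you should tighten is the claim that the telescoping sum ``bounds the accumulated squared gradients'': what~\eqref{eqn:fdiff_2} telescopes to is the finiteness of the \emph{weighted} sum $\sum_{t}\sum_i \mnorm{\nabla_i f(x_t)}^2/(\sqrt{y_{t+1,i}}+\epsilon)$, and if $y_{t,i}$ were unbounded this could in principle be finite while $\sum_t \mnorm{\nabla_i f(x_t)}^2$ diverges, so the uniform bound $B$ on the denominator does not follow immediately as stated. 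It does follow with one extra observation: since $\mnorm{\nabla_i f(x_t)}^2 = y_{t+1,i}-y_{t,i}$, each summand satisfies $\frac{y_{t+1,i}-y_{t,i}}{\sqrt{y_{t+1,i}}+\epsilon} \geq (1-\epsilon)\left(\sqrt{y_{t+1,i}}-\sqrt{y_{t,i}}\right)$ under the case condition $\sqrt{y_{t+1,i}}>1-\epsilon$, so the weighted sum dominates $(1-\epsilon)(\sqrt{y_{k+1,i}}-\sqrt{y_{T,i}})$ and its finiteness forces $y_{k,i}$ to remain bounded. The paper reaches the same conclusion by a different (limit-based) argument: it shows via~\eqref{eqn:seq_adagrad} that $\lim_{k\to\infty}\sqrt{y_{k+1,i}}=\infty$ can occur only in the degenerate situation where $\nabla f(x_k)\to 0$ anyway, and declares the denominator bounded by some $M$ in the non-trivial case; your route, once patched as above, is arguably more quantitative. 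Your concern about circularity is well placed and is resolved exactly as you anticipate: the descent/monotonicity step in the second case uses only $\sqrt{y_{k+1,i}}+\epsilon>1$ and $h<2/L$, and the bound $B$ (the paper's $M$) enters only afterwards, in the rate constant and in the final stepsize threshold $\overline{h}$.
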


\begin{proof}
Consider an arbitrary iteration $k \geq 0$. Under Assumption~\ref{assump_2}, from~\eqref{eqn:smooth} we have
\begin{align*}
    f(x_{k+1}) - f(x_k) & \leq (x_{k+1} - x_k)^{\top} \nabla f(x_k) + \frac{L}{2} \norm{x_{k+1} - x_k}^2.
\end{align*}
Upon substituting above from~\eqref{eqn:x_adagrad},
\begin{align}
    & f(x_{k+1}) - f(x_k) \nonumber \\
    & \leq \sum_{i=1}^d \left( - h \frac{\mnorm{\nabla_i f(x_k)}^2}{\sqrt{y_{k+1,i}} + \epsilon} + \frac{L h^2}{2} \frac{\mnorm{\nabla_i f(x_k)}^2}{\mnorm{\sqrt{y_{k+1,i}} + \epsilon}^2}\right). \label{eqn:fdiff_1}
\end{align}

\noindent {\bf Case-I}: First, we consider the case when $y_{k+1,i} > (1-\epsilon)^2$ for all $i \in \{1,\ldots,d\}$ for all $k \geq T$, where $T < \infty$. Recall that $\{y_{k,i}\}$ is a non-decreasing sequence for all $i$. Consider any iteration $k \geq T$. Then, $\mnorm{\sqrt{y_{k+1,i}} + \epsilon}^2 > \sqrt{y_{k+1,i}} + \epsilon$. From~\eqref{eqn:fdiff_1}, then we have
\begin{align}
    f(x_{k+1}) - f(x_k) 
    & \leq - h \left(1 - \frac{L h}{2}\right) \sum_{i=1}^d \frac{\mnorm{\nabla_i f(x_k)}^2}{\sqrt{y_{k+1,i}} + \epsilon}. \label{eqn:fdiff_2}
\end{align}
If $0 < h < \frac{2}{L}$, from above we have $f(x_{k+1}) \leq f(x_k)$. Since $\{f(x_k): k\geq T\}$ is a decreasing sequence and, under Assumption~\ref{assump_1}, bounded from below by $f_*$, the sequence converges with $\lim_{k \to \infty} f(x_k) < \infty$. Next, upon summation from $t=T$ to $t=k$ on both sides of~\eqref{eqn:fdiff_2}, due to telescopic cancellation on the L.H.S.,
\begin{align*}
    f(x_{k+1}) - f(x_T) \leq - h \left(1 - \frac{L h}{2}\right) \sum_{t=T}^k \sum_{i=1}^d \frac{\mnorm{\nabla_i f(x_t)}^2}{\sqrt{y_{t+1,i}} + \epsilon},
\end{align*}
which means
\begin{align*}
    & 0 \leq \lim_{k \to \infty} h \left(1 - \frac{L h}{2}\right) \sum_{t=T}^k \sum_{i=1}^d \frac{\mnorm{\nabla_i f(x_t)}^2}{\sqrt{y_{t+1,i}} + \epsilon} \\
    & \leq \lim_{k \to \infty} (f(x_T) - f(x_{k+1})) \leq f(x_T) - f_*. 
\end{align*}
So, $\lim_{k \to \infty} \sum_{t=T}^k \frac{\mnorm{\nabla_i f(x_t)}^2}{\sqrt{y_{t+1,i}} + \epsilon}$ is bounded, which implies that 
\begin{align}
    \lim_{k \to \infty} \frac{\mnorm{\nabla_i f(x_k)}^2}{\sqrt{y_{k+1,i}} + \epsilon} = 0, \, \forall i. \label{eqn:seq_adagrad}
\end{align}
Thus, either $\lim_{k \to \infty} \mnorm{\nabla_i f(x_k)} = 0$ or $\lim_{k \to \infty} \sqrt{y_{k+1,i}} = \infty$. Consider the case $\lim_{k \to \infty} \sqrt{y_{k+1,i}} = \infty$. From~\eqref{eqn:seq_adagrad}, we have $\mnorm{\nabla_i f(x_k)}^2 = \smallO(\sqrt{y_{k+1,i}} + \epsilon)$, which implies $\frac{\mnorm{\nabla_i f(x_k)}}{\sqrt{y_{k+1,i}} + \epsilon} = \smallO((\sqrt{y_{k+1,i}} + \epsilon)^{-0.5})$. So, if $\lim_{k \to \infty} \sqrt{y_{k+1,i}} = \infty$, we have $\lim_{k \to \infty} \frac{\mnorm{\nabla_i f(x_k)}}{\sqrt{y_{k+1,i}} + \epsilon} = 0$.
Since both $\lim_{k \to \infty} \frac{\mnorm{\nabla_i f(x_k)}^2}{\sqrt{y_{k+1,i}} + \epsilon}$ $= 0$ and $\lim_{k \to \infty} \frac{\mnorm{\nabla_i f(x_k)}}{\sqrt{y_{k+1,i}} + \epsilon} = 0$, it is possible only if $\lim_{k \to \infty} \mnorm{\nabla_i f(x_k)} = 0$. So, we have proved that $\lim_{k \to \infty} \mnorm{\nabla_i f(x_k)} = 0$ is true. Under Assumption~\ref{assump_3}, then we have $\lim_{k \to \infty} f(x_k) = f_*$. The above argument further shows that $\lim_{k \to \infty} \sqrt{y_{k+1,i}} = \infty$ is possible only in the trivial case where the function is at the minimum point. In the non-trivial case, therefore, $\sqrt{y_{k+1,i}} + \epsilon$ is bounded. Then, for $0 < h < \frac{2}{L}$, $\exists M \in (0,\infty)$ such that $\sqrt{y_{k+1,i}} + \epsilon \leq M \,  \forall i$. From~\eqref{eqn:fdiff_2},
\begin{align*}
    f(x_{k+1}) - f(x_k) & \leq - \frac{h}{M} \left(1 - \frac{L h}{2}\right) \norm{\nabla f(x_k)}^2.
\end{align*}
Under Assumption~\ref{assump_3}, from above we have
\begin{align*}
    f(x_{k+1}) - f(x_k) & \leq - \frac{h}{M} \left(1 - \frac{L h}{2}\right) 2l (f(x_k) - f_*).
\end{align*}
Upon defining $c_1 = h \left(1 - \frac{L h}{2}\right) \frac{2l}{M}$, we rewrite the above as
\begin{align*}
    f(x_{k+1}) - f(x_k) & \leq - c_1 (f(x_k) - f_*),
\end{align*}
which means
\begin{align}
    f(x_{k+1}) - f_* & \leq (1 - c_1) (f(x_k) - f_*), \, \forall k \geq T. \label{eqn:conv_adagrad_1}
\end{align}
Moreover, $0 < h < \frac{2}{L}$ implies that $c_1 > 0$ and $0 < h < \frac{M}{2l}$ implies that $c_1 < 1$. So, $(1-c_1) \in (0,1)$ for $0 < h < \min\{\frac{2}{L}, \frac{M}{2l}\}$. 

\noindent {\bf Case-II}: Next, we consider the case when $y_{k+1,i} \leq (1-\epsilon)^2$ for all $i \in \{1,\ldots,d\}$ for all $k \geq 0$. Then, $\sqrt{y_{k+1,i}} + \epsilon \leq 1$. Also, $y_0 = 0_d$ and~\eqref{eqn:y_adagrad} implies that $y_{k,i} \geq 0 \, \forall k,i$.
From~\eqref{eqn:fdiff_1}, then we have
\begin{align*}
    f(x_{k+1}) - f(x_k) & \leq - h \left(1 - \frac{Lh}{2\epsilon^2}\right) \norm{\nabla f(x_k)}^2.
\end{align*}
Under Assumption~\ref{assump_3}, from above we have
\begin{align*}
    f(x_{k+1}) - f(x_k) & \leq - h \left(1 - \frac{L h}{2 \epsilon^2}\right) 2l (f(x_k) - f_*).
\end{align*}
Upon defining $c_2 = h \left(1 - \frac{L h}{2 \epsilon^2}\right) 2l$, we obtain that
\begin{align}
    f(x_{k+1}) - f_* & \leq (1 - c_2) (f(x_k) - f_*),\, \forall k \geq 0. \label{eqn:conv_adagrad_2}
\end{align}
Moreover, $(1-c_2) \in (0,1)$ for $0 < h < \min\left\{\frac{2\epsilon^2}{L}, \frac{1}{2l}\right\}$. 

\noindent {\bf Case-III}: Finally, we consider the case when, for each $i \in \{1,\ldots,d\}$, $\exists T_i \in [0, \infty)$ such that $y_{k+1,i} \leq (1-\epsilon)^2$ for $k < T_i$ and $y_{k+1,i} > (1-\epsilon)^2$ for $k \geq T_i$. Then, $\exists T = \max \{T_1,\ldots,T_d\}$ such that $y_{k+1,i} > (1-\epsilon)^2$ for all $i \in \{1,\ldots,d\}$ for all $k \geq T$. For $k \geq T$, the analysis in Case-I directly applies. For $k < T$, from the analysis in Case-I- and Case-II, it follows that
\begin{align}
    f(x_{k+1}) - f_* & \leq \max\{(1 - c_1),(1-c_2)\} (f(x_k) - f_*), \label{eqn:conv_adagrad_3}
\end{align}
where $\max\{(1 - c_1),(1-c_2)\}$ for $0 < h < \min\{\frac{2}{L}, \frac{M}{2l}, \frac{2\epsilon^2}{L}, \frac{1}{2l}\}$. Since $\epsilon < 1$, $\frac{2\epsilon^2}{L} < \frac{2}{L}$.

We conclude that, for $0 < h < \min\{\frac{2 \epsilon^2}{L}, \frac{M}{2l}, \frac{1}{2l}\}$, 
\begin{align*}
    (f(x_{k+1}) - f_*) \leq \max\{(1 - c_1),(1-c_2)\} (f(x_k) - f_*), \, \forall k \geq 0,
\end{align*}
where $\max\{(1 - c_1),(1-c_2)\} \in (0,1)$.
The proof is complete with $\rho = \max\{(1 - c_1),(1-c_2)\}$ and $\overline{h} = \min\{\frac{2 \epsilon^2}{L}, \frac{M}{2l}, \frac{1}{2l}\}$.
\end{proof}

Since $\rho \in (0,1)$, Theorem~\ref{thm:adagrad} implicates that the sub-optimality gap $(f(x_k) - f_*)$ of the AdaGrad algorithm~\eqref{eqn:y_adagrad}-\eqref{eqn:x_adagrad} {\em linearly} converges to zero at the worst-case rate $\rho$, for small enough stepsize $h$.

\section{Linear convergence of Adam}
\label{sec:adam}

\begin{theorem} \label{thm:adam}
Consider the Adam algorithm in~\eqref{eqn:mu_adam}-\eqref{eqn:x_adam} with initialization $\mu_0 = \nu_0 = 0_d$ and $x_0 \in \R^d$ and the parameters $\beta_{1k}, \beta_2 \in [0,1)$, $\epsilon \in (0,1)$ . If Assumptions~\ref{assump_1}-\ref{assump_3} hold, then $\exists \overline{\beta_k} \in (0,1)$ such that for $\beta_{1k} \in [0,\overline{\beta_k})$ there exists $\overline{h} > 0$ such that if the stepsize $0 < h < \overline{h}$, then $(f(x_{k+1}) - f_*) \leq \rho (f(x_k) - f_*)$ for all $k \geq 0$ where $\rho \in (0,1)$.
\end{theorem}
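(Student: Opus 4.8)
The plan is to transplant the three-case skeleton of the proof of Theorem~\ref{thm:adagrad} onto Adam, organizing everything around the size of the denominator $\sqrt{\nu_{k+1,i}}+\epsilon$ relative to $1$, and to insert one genuinely new ingredient that copes with the momentum in the numerator. As before, I would begin from the descent inequality~\eqref{eqn:smooth} and substitute the update~\eqref{eqn:x_adam} to obtain
\begin{align*}
    f(x_{k+1}) - f(x_k) \leq \sum_{i=1}^d \left( -h\,\frac{\mu_{k+1,i}\,\nabla_i f(x_k)}{\sqrt{\nu_{k+1,i}}+\epsilon} + \frac{Lh^2}{2}\,\frac{\mu_{k+1,i}^2}{\mnorm{\sqrt{\nu_{k+1,i}}+\epsilon}^2}\right).
\end{align*}
In contrast to~\eqref{eqn:fdiff_1}, the first-order term now contains the product $\mu_{k+1,i}\nabla_i f(x_k)$, which is sign-indefinite, so~\eqref{eqn:smooth} does not by itself furnish a descent direction; this is precisely the obstruction that the momentum creates.

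To recover a usable descent term I would unfold the momentum through its recursion~\eqref{eqn:mu_adam}, writing $\mu_{k+1,i}\nabla_i f(x_k) = (1-\beta_{1k})\mnorm{\nabla_i f(x_k)}^2 + \beta_{1k}\,\mu_{k,i}\nabla_i f(x_k)$, bound the indefinite cross term by Young's inequality $\mnorm{\mu_{k,i}\nabla_i f(x_k)}\le \tfrac12(\mu_{k,i}^2 + \mnorm{\nabla_i f(x_k)}^2)$, and bound the quadratic term by convexity of the square, $\mu_{k+1,i}^2 \le \beta_{1k}\mu_{k,i}^2 + (1-\beta_{1k})\mnorm{\nabla_i f(x_k)}^2$. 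Collecting terms turns the right-hand side into $\sum_i \tfrac{h}{\sqrt{\nu_{k+1,i}}+\epsilon}\bigl(-A_k\mnorm{\nabla_i f(x_k)}^2 + B_k\,\mu_{k,i}^2\bigr)$ for coefficients with $A_k>0$ and $B_k = \bigO(\beta_{1k})$, provided $h$ is small. This is exactly where the threshold $\overline{\beta_k}$ is born: it is the largest value of $\beta_{1k}$ for which the leading coefficient $A_k$ stays positive and the residual coefficient $B_k$ stays small enough to be absorbed. The leftover momentum term $B_k\mu_{k,i}^2$ is the main obstacle, since per-step descent can genuinely fail when $\mu_{k,i}$ dwarfs the current gradient.

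To neutralize the residual I would pass to a momentum-augmented potential $V_k = (f(x_k)-f_*) + \lambda \sum_i \mu_{k,i}^2/(\sqrt{\nu_{k,i}}+\epsilon)$: the convexity bound above shows the momentum part contracts by the factor $\beta_{1k}$ at each step while injecting only a gradient term, so for a suitable $\lambda>0$ and $\beta_{1k}<\overline{\beta_k}$ the $\mu_{k,i}^2$ contributions cancel and one is left with a genuine decrement of $V_k$ controlled by $-c\sum_i \mnorm{\nabla_i f(x_k)}^2/(\sqrt{\nu_{k+1,i}}+\epsilon)$. Equivalently, one may telescope the displayed inequality as in the passage leading to~\eqref{eqn:seq_adagrad} and use that $\mu_{k,i}$ is an exponential moving average, so that $\sum_k \mu_{k,i}^2$ is bounded by $\sum_k \mnorm{\nabla_i f(x_k)}^2$ up to a factor $\bigO(\beta_{1k})$, letting the residual be reabsorbed. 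Once the cost is non-increasing, $f(x_k)-f_* \le f(x_0)-f_*$ together with the standard smoothness bound $\norm{\nabla f(x_k)}^2\le 2L(f(x_k)-f_*)$ bounds the gradients along the trajectory; since $\nu_{k+1,i}$ is a convex combination of $\{\mnorm{\nabla_i f(x_j)}^2\}_{j\le k}$, this bounds $\nu_{k+1,i}$ and hence yields $\sqrt{\nu_{k+1,i}}+\epsilon\le M<\infty$, exactly the finite bound used in Case-I of Theorem~\ref{thm:adagrad}.

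With the denominator confined to $[\epsilon, M]$, the rest is a coordinate-wise replay of Theorem~\ref{thm:adagrad}. When $\sqrt{\nu_{k+1,i}}+\epsilon\le 1$ the quadratic term is controlled as in Case-II, and when $\sqrt{\nu_{k+1,i}}+\epsilon>1$ it is controlled as in Case-I; in both regimes I would invoke the PL inequality (Assumption~\ref{assump_3}) to replace $\norm{\nabla f(x_k)}^2$ by $2l(f(x_k)-f_*)$ and read off a contraction $f(x_{k+1})-f_*\le(1-c)(f(x_k)-f_*)$ with $1-c\in(0,1)$ for $0<h<\overline{h}$ and $\beta_{1k}<\overline{\beta_k}$. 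The general situation, in which the threshold is crossed only along some coordinates after a finite $T=\max_i T_i$, is then closed exactly as in Case-III by taking $\rho$ to be the worst of the per-case factors. In summary, the only step that departs from the AdaGrad argument is the decrement-under-momentum of the previous paragraph, whose difficulty is entirely the indefinite sign of $\mu_{k+1,i}\nabla_i f(x_k)$; everything after the denominator is bounded is mechanical.
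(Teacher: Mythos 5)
Your plan is sound and proves the substance of the theorem, but the step that handles the momentum is genuinely different from the paper's. The paper does not use Young's inequality or a Lyapunov function: after unfolding $\mu_{k+1,i}$ via~\eqref{eqn:mu_adam}, it splits the descent term $(1-\beta_{1k})(1-h\tfrac{L}{2}(1-\beta_{1k}))\mnorm{\nabla_i f(x_k)}^2$ into a $(1-\theta_k)$ piece that is kept and a $\theta_k$ piece that is shown to dominate, pointwise at every iteration, both the cross term $\beta_{1k}\mu_{k,i}\nabla_i f(x_k)$ and the $h\tfrac{L}{2}\beta_{1k}^2\mnorm{\mu_{k,i}}^2$ term (inequality~\eqref{eqn:fadam_3}); this works only when $\theta_k > \frac{\beta_{1k}}{1-\beta_{1k}}\frac{\mnorm{\mu_{k,i}}}{\mnorm{\nabla_i f(x_k)}}$, which is where the trajectory-dependent threshold $\overline{\beta_k} = \bigl(1+\mnorm{\mu_{k,i}}/\mnorm{\nabla_i f(x_k)}\bigr)^{-1}$ comes from. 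The paper thus gets a per-step decrease of $f$ itself and never needs an augmented potential. Your route (Young's inequality plus the potential $V_k$) buys a $\beta_{1k}$-threshold that does not depend on the ratio $\mnorm{\mu_{k,i}}/\mnorm{\nabla_i f(x_k)}$, and your observation that $f(x_k)-f_*\le f(x_0)-f_*$ combined with $\norm{\nabla f(x)}^2\le 2L(f(x)-f_*)$ bounds $\nu_{k,i}$ is cleaner than the paper's limit argument for establishing the bound $M$ on the denominator.

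Two details in your version need care. First, because $\nu_{k,i}$ is an exponential moving average rather than a cumulative sum, $\sqrt{\nu_{k+1,i}}+\epsilon$ can shrink by a factor as small as $\sqrt{\beta_2}$ per step, so the momentum part of $V_k$ contracts by $\beta_{1k}/\sqrt{\beta_2}$, not $\beta_{1k}$; you need $\beta_{1k}<\sqrt{\beta_2}$ (or drop the $\nu$-denominator from the potential and use the bound $M$ instead). Second, the Lyapunov route delivers $V_{k+1}\le\rho V_k$ and hence $f(x_k)-f_*\le\rho^k(f(x_0)-f_*)$ (using $\mu_0=0_d$ so $V_0=f(x_0)-f_*$), which is linear convergence but not literally the per-step contraction $(f(x_{k+1})-f_*)\le\rho(f(x_k)-f_*)$ asserted in the statement; if you want the statement verbatim you must either convert back or adopt the paper's pointwise-domination argument.
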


\begin{proof}
Under Assumption~\ref{assump_2}, from~\eqref{eqn:smooth} we have
\begin{align*}
    f(x_{k+1}) - f(x_k) & \leq (x_{k+1} - x_k)^{\top} \nabla f(x_k) + \frac{L}{2} \norm{x_{k+1} - x_k}^2.
\end{align*}
Upon substituting above from~\eqref{eqn:x_adam},
\begin{align}
    & f(x_{k+1}) - f(x_k) \nonumber \\
    & \leq \sum_{i=1}^d \left( - h \frac{\mu_{k+1,i} \nabla_i f(x_k)}{\sqrt{\nu_{k+1,i}} + \epsilon} + \frac{L h^2}{2} \frac{\mnorm{\mu_{k+1,i}}^2}{\mnorm{\sqrt{\nu_{k+1,i}} + \epsilon}^2}\right). \label{eqn:fadam_1}
\end{align}

\noindent {\bf Case-I}: First, we consider the case $\nu_{k+1,i} > (1-\epsilon)^2$ for all $i \in \{1,\ldots,d\}$ for all $k \geq T$, where $T < \infty$. Then, $\mnorm{\sqrt{\nu_{k+1,i}} + \epsilon}^2 > \sqrt{\nu_{k+1,i}} + \epsilon$. From~\eqref{eqn:fadam_1}, then we have
\begin{align}
    & f(x_{k+1}) - f(x_k) \nonumber \\
    & \leq - h \sum_{i=1}^d \left(\frac{\mu_{k+1,i} \nabla_i f(x_k)}{\sqrt{\nu_{k+1,i}} + \epsilon} - \frac{L h}{2} \frac{\mnorm{\mu_{k+1,i}}^2}{\sqrt{\nu_{k+1,i}} + \epsilon}\right). \label{eqn:case_1}
\end{align}
Upon substituting $\mu_{k+1,i}$ above from~\eqref{eqn:mu_adam} and rearranging the terms in the numerator,
\begin{align}
    & f(x_{k+1}) - f(x_k) \nonumber \\
    &\leq\!- h\!\sum_{i=1}^d\!\frac{(1\!-\!\beta_{1k}) (1\!-\!h\frac{L}{2}(1\!-\!\beta_{1k})) \mnorm{\nabla_i f(x_k)}^2 \!-\! h \frac{L}{2} \beta_{1k}^2 \mnorm{\mu_{k,i}}^2}{\sqrt{\nu_{k+1,i}} + \epsilon} \nonumber \\
    & \!- h\!\sum_{i=1}^d\!\frac{\beta_{1k} (1\!-\!hL(1\!-\!\beta_{1k})) \mu_{k,i} \nabla_i f(x_k)}{\sqrt{\nu_{k+1,i}} + \epsilon} \nonumber \\
    &=\! - h\!\sum_{i=1}^d\!\frac{(1\!-\!\theta_k)(1\!-\!\beta_{1k}) (1\!-\!h\frac{L}{2}(1\!-\!\beta_{1k})) \mnorm{\nabla_i f(x_k)}^2 \!}{\sqrt{\nu_{k+1,i}} + \epsilon} \nonumber \\
    &\! - h\!\sum_{i=1}^d\!\frac{\theta_k (1\!-\!\beta_{1k}) (1\!-\!h\frac{L}{2}(1\!-\!\beta_{1k})) \mnorm{\nabla_i f(x_k)}^2}{\sqrt{\nu_{k+1,i}} + \epsilon} \nonumber \\
    & - h \sum_{i=1}^d \frac{- h \frac{L}{2} \beta_{1k}^2 \mnorm{\mu_{k,i}}^2 + \beta_{1k} (1-hL(1-\beta_{1k})) \mu_{k,i} \nabla_i f(x_k)}{\sqrt{\nu_{k+1,i}} + \epsilon}, \nonumber
\end{align}
for any $\theta_k \in (0,1)$. We define two set of indices $I = \{i \, | i\in \{1,\ldots,d\}, \mu_{k,i} \neq 0 \}$ and its complement $I'$. Then, we rewrite the above inequality as
\begin{align}
    & f(x_{k+1}) - f(x_k) \nonumber \\
    &\leq \!- h\!\sum_{i \in I}\!\frac{(1\!-\!\theta_k)(1\!-\!\beta_{1k}) (1\!-\!h\frac{L}{2}(1\!-\!\beta_{1k})) \mnorm{\nabla_i f(x_k)}^2\!}{\sqrt{\nu_{k+1,i}} + \epsilon} \nonumber \\
    &\!- h\!\sum_{i \in I}\!\frac{\theta_k (1\!-\!\beta_{1k}) (1\!-\!h\frac{L}{2}(1\!-\!\beta_{1k})) \mnorm{\nabla_i f(x_k)}^2}{\sqrt{\nu_{k+1,i}} + \epsilon} \nonumber \\
    & - h \sum_{i \in I} \frac{- h \frac{L}{2} \beta_{1k}^2 \mnorm{\mu_{k,i}}^2 + \beta_{1k} (1-hL(1-\beta_{1k})) \mu_{k,i} \nabla_i f(x_k)}{\sqrt{\nu_{k+1,i}} + \epsilon} \nonumber \\
    & - h \sum_{i \in I'} \frac{(1-\beta_{1k}) (1-h\frac{L}{2}(1-\beta_{1k})) \mnorm{\nabla_i f(x_k)}^2}{\sqrt{\nu_{k+1,i}} + \epsilon}. \label{eqn:fadam_2}
\end{align}
For $\beta_{1k} < 1$ and $h < \frac{2}{L(1-\beta_{1k})}$, the last term on the R.H.S. in~\eqref{eqn:fadam_2} is negative. So, for $i \in I$, we want to show that
\begin{align}
    & \theta_k (1\!-\!\beta_{1k}) (1\!-\!h\frac{L}{2}(1\!-\!\beta_{1k})) \mnorm{\nabla_i f(x_k)}^2 \nonumber \\
    & \geq h \frac{L}{2} \beta_{1k}^2 \mnorm{\mu_{k,i}}^2 + \beta_{1k} (1\!-\!hL(1\!-\!\beta_{1k})) \mnorm{\mu_{k,i} \nabla_i f(x_k)}, \label{eqn:fadam_3}
\end{align}
for some $\theta_k \in (0,1)$ and some $h > 0$, which would imply that $f(x_{k+1}) - f(x_k) \leq 0$, from~\eqref{eqn:fadam_2}. Consider any $i \in I$.
We note that~\eqref{eqn:fadam_3} is equivalent to
\begin{align*}
    & \theta_k (1-\beta_{1k}) \mnorm{\nabla_i f(x_k)}^2 - \beta_{1k} \mnorm{\mu_{k,i} \nabla_i f(x_k)} \geq h \frac{L}{2} \beta_{1k}^2 \mnorm{\mu_{k,i}}^2 \\ & + h \frac{L}{2} (1-\beta_{1k}) \left(\theta_k (1-\beta_{1k}) \mnorm{\nabla_i f(x_k)}^2 - 2\beta_{1k} \mnorm{\mu_{k,i} \nabla_i f(x_k)} \right),
\end{align*}
which is implied if
\begin{align}
    & \theta_k (1-\beta_{1k}) \mnorm{\nabla_i f(x_k)}^2 - \beta_{1k} \mnorm{\mu_{k,i} \nabla_i f(x_k)} \geq h \frac{L}{2} \beta_{1k}^2 \mnorm{\mu_{k,i}}^2 \nonumber \\ & + h \frac{L}{2} (1-\beta_{1k}) \left(\theta_k (1-\beta_{1k}) \mnorm{\nabla_i f(x_k)}^2 - \beta_{1k} \mnorm{\mu_{k,i} \nabla_i f(x_k)} \right). \label{eqn:fadam_4}
\end{align}

For the case $\mnorm{\mu_{k,i}} = \infty$, we choose $\beta_{1k} = 0$ and 
$h < \frac{2}{L}$ so that~\eqref{eqn:fadam_4} holds. Otherwise, for the case $\mnorm{\mu_{k,i}} < \infty$, if the L.H.S. in~\eqref{eqn:fadam_4} is positive, then~\eqref{eqn:fadam_4} holds for 
\begin{align}
    h \leq \frac{p_k}{q_k}, \label{hbd_1}
\end{align}
where we denote
\begin{align*}
    p_k & = \theta_k (1-\beta_{1k}) \mnorm{\nabla_i f(x_k)}^2 - \beta_{1k} \mnorm{\mu_{k,i} \nabla_i f(x_k)}, \\
    q_k & = \frac{L}{2} (1-\beta_{1k}) \left(\theta_k (1-\beta_{1k}) \mnorm{\nabla_i f(x_k)}^2 - \beta_{1k} \mnorm{\mu_{k,i} \nabla_i f(x_k)} \right) \\
    & + \frac{L}{2} \beta_{1k}^2 \mnorm{\mu_{k,i}}^2.
\end{align*}
Now, the L.H.S. in~\eqref{eqn:fadam_4} being positive is equivalent to
\begin{align}
     \theta_k > \frac{\beta_{1k}}{1 - \beta_{1k}} \frac{\mnorm{\mu_{k,i}}}{\mnorm{\nabla_i f(x_k)}}. \label{eqn:theta_k_bd}
\end{align}
Since $\mnorm{\mu_{k,i}} < \infty$, we choose $\beta_{1k} < \frac{1}{1 + \frac{\mnorm{\mu_{k,i}}}{\mnorm{\nabla_i f(x_k)}}}$ so that $\beta_{1k} \in (0,1)$ and $\frac{\beta_{1k}}{1 - \beta_{1k}} \frac{\mnorm{\mu_{k,i}}}{\mnorm{\nabla_i f(x_k)}} < 1$. Such a choice of $\beta_{1k}$ allows us to choose $\theta_k \in \left(\frac{\beta_{1k}}{1 - \beta_{1k}} \frac{\mnorm{\mu_{k,i}}}{\mnorm{\nabla_i f(x_k)}},1\right)$. So, for $\beta_{1k} < \frac{1}{1 + \frac{\mnorm{\mu_{k,i}}}{\mnorm{\nabla_i f(x_k)}}}$ and $\theta_k \in \left(\frac{\beta_{1k}}{1 - \beta_{1k}} \frac{\mnorm{\mu_{k,i}}}{\mnorm{\nabla_i f(x_k)}},1\right)$, we have that the L.H.S. in~\eqref{eqn:fadam_4} is positive. Moreover, since $\mnorm{\mu_{k,i}} < \infty$, both numerator and denominator in the R.H.S. of~\eqref{hbd_1} are $\bigO\left(\!\theta_k (1\!-\!\beta_{1k}) \mnorm{\nabla_i f(x_k)}^2\!-\!\beta_{1k} \mnorm{\mu_{k,i}\!\nabla_i f(\!x_k\!)}\!\right)$. So, the R.H.S. of~\eqref{hbd_1} is positive. So, we can choose $h > 0$ satisfying~\eqref{hbd_1} so that~\eqref{eqn:fadam_4} holds. We conclude that, $\exists \overline{\beta_k}, \underline{\theta_k} \in (0,1)$ such that for $\beta_{1k} \in [0,\overline{\beta_k})$ and $\theta_k \in (\underline{\theta_k},1)$ there exists $h_{1} \in (0,\infty)$ such that for $h < h_{1}$,~\eqref{eqn:fadam_4} holds, which implies that~\eqref{eqn:fadam_3} holds for $i \in I$.

Upon substituting from~\eqref{eqn:fadam_3} in~\eqref{eqn:fadam_2}, we have
\begin{align}
    & f(x_{k+1}) - f(x_k) \nonumber \\
    & \leq - h \sum_{i \in I} \frac{(1-\theta_k)(1-\beta_{1k}) (1-h\frac{L}{2}(1-\beta_{1k})) \mnorm{\nabla_i f(x_k)}^2}{\sqrt{\nu_{k+1,i}} + \epsilon} \nonumber \\
    & - h \sum_{i \in I'} \frac{(1-\beta_{1k}) (1-h\frac{L}{2}(1-\beta_{1k})) \mnorm{\nabla_i f(x_k)}^2}{\sqrt{\nu_{k+1,i}} + \epsilon} \nonumber \\
    & \leq - h \sum_{i \in I} \frac{(1-\theta_k)(1-\beta_{1k}) (1-h\frac{L}{2}(1-\beta_{1k})) \mnorm{\nabla_i f(x_k)}^2}{\sqrt{\nu_{k+1,i}} + \epsilon} \nonumber \\
    & - h \sum_{i \in I'} \frac{(1-\theta_k)(1-\beta_{1k}) (1-h\frac{L}{2}(1-\beta_{1k})) \mnorm{\nabla_i f(x_k)}^2}{\sqrt{\nu_{k+1,i}} + \epsilon} \nonumber \\
    & = - h (1-\theta_k)(1-\beta_{1k}) (1-h\frac{L}{2}(1-\beta_{1k})) \sum_{i=1}^d \frac{\mnorm{\nabla_i f(x_k)}^2}{\sqrt{\nu_{k+1,i}} + \epsilon} \nonumber \\
    & \leq 0. \label{eqn:fadam_5}
\end{align}
Under Assumption~\ref{assump_1}, we follow the argument in the proof of Theorem~\ref{thm:adagrad} after~\eqref{eqn:fdiff_2}, and obtain
\begin{align}
    \lim_{k \to \infty} \frac{\mnorm{\nabla_i f(x_k)}^2}{\sqrt{\nu_{k+1,i}} + \epsilon} = 0, \, \forall i. \label{eqn:seq_adam}
\end{align}
Under Assumption~\ref{assump_3}, following the argument in the proof of Theorem~\ref{thm:adagrad} after~\eqref{eqn:seq_adagrad}, we have $\lim_{k \to \infty} \nabla f(x_k) = 0_d$, and $\lim_{k \to \infty} f(x_k) = f_*$, and $\exists M \in (0,\infty)$ such that $\sqrt{\nu_{k+1,i}} + \epsilon \leq M \,  \forall i$. From~\eqref{eqn:fadam_5}, then we obtain
\begin{align*}
    & f(x_{k+1}) - f(x_k) \\
    & \leq - \frac{h}{M}(1-\theta_k)(1-\beta_{1k}) (1-h\frac{L}{2}(1-\beta_{1k})) \norm{\nabla f(x_k)}^2.
\end{align*}
Under Assumption~\ref{assump_3} and defining $c_{1k} = h (1-\theta_k)(1-\beta_{1k}) (1-h\frac{L}{2}(1-\beta_{1k})) \frac{2l}{M}$, from above we get
\begin{align}
    f(x_{k+1}) - f_* & \leq (1 - c_{1k}) (f(x_k) - f_*), \, \forall k \geq T. \label{eqn:conv_adam_1}
\end{align}
Moreover, $0 < h < \frac{2}{L(1-\beta_{1k})}$ implies that $c_{1k} > 0$ and $0 < h < \frac{M}{2l}$ implies that $c_1 < 1$. So, $(1-c_1) \in (0,1)$ for $0 < h < \min\{\frac{2}{L (1-\beta_{1k})}, \frac{M}{2l}, h_1\}$.

\noindent {\bf Case-II}: Next, we consider the case when $\nu_{k+1,i} \leq (1-\epsilon)^2$ for all $i \in \{1,\ldots,d\}$ for all $k \geq 0$. Then, $\sqrt{\nu_{k+1,i}} + \epsilon \leq 1$. Also, $\nu_0 = 0_d$ and~\eqref{eqn:nu_adam} implies that $\nu_{k,i} \geq 0 \, \forall k,i$.
From~\eqref{eqn:fadam_1}, then we have
\begin{align*}
    f(x_{k+1}) - f(x_k) & \leq \sum_{i=1}^d \left( - h \mu_{k+1,i} \nabla_i f(x_k) + \frac{L h^2}{2 \epsilon^2} \mnorm{\mu_{k+1,i}}^2\right).
\end{align*}
We note that the above inequality is similar to~\eqref{eqn:case_1} where $\sqrt{\nu_{k+1,i}} + \epsilon$ in~\eqref{eqn:case_1} is replaced by $1$ and $L$ is replaced by $\frac{L}{\epsilon^2}$. So, we follow the proof above in Case-I, and instead of~\eqref{eqn:fadam_5}, we obtain that
\begin{align}
    & f(x_{k+1}) - f(x_k) \nonumber \\
    & \leq - h (1-\theta_k)(1-\beta_{1k}) (1-h\frac{L}{2\epsilon^2}(1-\beta_{1k})) \sum_{i=1}^d \mnorm{\nabla_i f(x_k)}^2 \nonumber \\
    & = \hspace{-0.2em} - h (1-\theta_k)(1-\beta_{1k}) (1-h\frac{L}{2\epsilon^2}(1-\beta_{1k})) \norm{\nabla f(x_k)}^2. \label{eqn:fadam_6}
\end{align}
Under Assumption~\ref{assump_3} and defining $c_{2k} = h (1-\theta_k)(1-\beta_{1k}) (1-h\frac{L}{2\epsilon^2}(1-\beta_{1k})) 2l$, from above we get
\begin{align}
    f(x_{k+1}) - f_* & \leq (1 - c_{2k}) (f(x_k) - f_*), \, \forall k \geq 0. \label{eqn:conv_adam_2}
\end{align}

The rest of the proof follows the same argument as in the proof of Theorem~\ref{thm:adagrad}.
\end{proof}

\begin{remark}
    Bias correction: Since $\mu_0 = \nu_0 = 0$, the moment estimates $\mu_k$ and $\nu_k$ are biased towards zero at the early stages of iterations. Thus, in practice, Adam is implemented with a bias correction that accounts for this initialization of the moment estimates at zero~\cite{kingma2014adam}. When bias correction is active, $h$ in~\eqref{eqn:x_adam} is essentially replaced by $h \frac{\sqrt{1-\beta_2^{k+1}}}{1-\beta_{1k}^{k+1}}$. Since $\beta_{1k}, \beta_2 \in [0,1)$, we have $\lim_{k \to \infty} \frac{1-\beta_{1k}^{k+1}}{\sqrt{1-\beta_2^{k+1}}} = 1$ and $0 < \frac{1-\beta_{1k}^{k+1}}{\sqrt{1-\beta_2^{k+1}}} < \infty$. Thus, in the case of initial bias correction, the result presented in Theorem~\ref{thm:adam} is valid, upon replacing $\overline{h}$ with the positive quantity $\overline{h} \min_{k \geq 0} \frac{1-\beta_{1k}^{k+1}}{\sqrt{1-\beta_2^{k+1}}}$.    
\end{remark}

\begin{remark}
    Like our proof above, some of the existing analyses of Adam require a decreasing $\beta_{1k}$, including~\cite{chen2018convergence, kingma2014adam, reddi2019convergence}. While there are convergence results of Adam with constant $\beta_1$, they do not prove {\em linear} convergence.
\end{remark}

\begin{remark}
    We do not utilize the explicit update (3a) or (4b) of the denominator $y$ or $\nu$. Our proofs hold as long as $y_{k,i} > 0$ or $\nu_{k,i} > 0$. The specific form of~\eqref{eqn:y_adagrad} or~\eqref{eqn:nu_adam} only implicitly appears in our analysis in terms of its bound $M$. Therefore, Adam's analysis trivially extends to AdaBelief~\cite{zhuang2020adabelief} and AMSGrad~\cite{reddi2019convergence}. In RAdam~\cite{liu2019variance}, $ \exists T < \infty$ such that $\rho(k)$ is increasing for $k \geq T$ and converges to $\rho_{\infty}$. Then, by definition, $0 < r(k) < \infty$. Then, following the argument in Remark~1, our analysis of Adam easily extends to RAdam for $k \geq T$ by replacing $\overline{h}$ with $\overline{h} \min_{k \geq T} r(k)$. 
\end{remark}
\section{AdaGrad with stochastic gradients}

In practice, neural networks are trained with stochastic or mini-batch gradients at each iteration of the optimizer. So, in this section, we present our analysis of AdaGrad in the stochastic setting. For simplicity, we consider only one data point in each iteration. However, our result is applicable to mini-batch with more than one data points. For each data point $\mathcal{D}$, we define the loss function $l: \R^d \to \R$ as $l(\cdot, \mathcal{D})$ and its gradient $g(x;\mathcal{D}) = \nabla_x l(x;\mathcal{D})$. Our aim is to minimize the empirical risk,
\begin{align}
     \min_{x \in \R^d} \, f(x): = \E_{\zeta} \left[l(x;\mathcal{D}_{\zeta}) \right]. \label{eqn:opt_2}
\end{align}

At each iteration $k \geq 0$, AdaGrad randomly chooses a data point $\mathcal{D}_{\zeta_k}$, based on the realization $\zeta_k$ of the random variable $\zeta$, and computes its stochastic gradient $g_{\zeta_k}(k) = g(x_k;\mathcal{D}_{\zeta_k})$. We let $g_{i,\zeta_k}(k)$ denote the $i$-th element of $g_{\zeta_k}(k)$, for each $i \in \{1,\ldots,d\}$. Instead of~\eqref{eqn:y_adagrad}-\eqref{eqn:x_adagrad}, $x_k$ and $y_k$ are updated as
\begin{subequations}
\begin{align}
    x_{k+1,i} & = x_{k,i} - h \frac{g_{i,\zeta_k}(k)}{\sqrt{y_{k,i}} + \epsilon}, \label{eqn:x_st} \\
    y_{k+1,i} & = y_{k,i} + \mnorm{g_{i,\zeta_k}(k)}^2. \label{eqn:y_st}
\end{align}
\end{subequations}

For each iteration $k\geq 0$ we define the following.
\begin{itemize}
    \item Let $\condexp{\cdot}$ denote the conditional expectation of a function the random variables $\zeta_k$, given the current $x_k$ and $y_k$. 
    \item Let $\totexp{\cdot}$ denote the total expectation of a function of the random variables $\{\zeta_0,\ldots,\zeta_k\}$ given the initial condition $x_0$ and $y_0$. Specifically, $\totexp{\cdot} = \E_{\zeta_0,\ldots,\zeta_k}[\cdot], ~ k\geq 0$.
    \item For each $i \in \{1,\ldots,d\}$, define the conditional variance of $g_{i,\zeta_k}(k)$, which is a function of the random variable $\zeta_k$, given the current $x_k$ and $y_k$ as
    $\var{g_{i,\zeta_k}(k)} = \condexp{\mnorm{g_{i,\zeta_k}(k) - \condexp{g_{i,\zeta_k}(k)}}^2} = \condexp{\mnorm{g_{i,\zeta_k}(k)}^2} - \mnorm{\condexp{g_{i,\zeta_k}(k)}}^2$.
\end{itemize}

We make two additional standard assumptions~\cite{bottou2010large, wang2023convergence} for stochastic gradients as follows. Assumption~\ref{assump_5} is regarding boundedness of coordinate-wise affine noise variance~\cite{wang2023convergence}.

\begin{assumption} \label{assump_4}
At each iteration $k \geq 0$, the stochastic gradient is an unbiased estimate of true gradient, i.e., 
$\condexp{g_{\zeta_k}(k)} = \nabla f(x_k)$.
\end{assumption}

\begin{assumption} \label{assump_5}
For each $i \in \{1,\ldots,d\}$, there exist two non-negative real scalar values $V_{1,i}$ and $V_{2,i}$ such that, for each $k \geq 0$, 
\begin{align*}
    \var{g_{i,\zeta_k}(k)} \leq V_{1,i} + V_{2,i} \mnorm{\nabla_i f(x_k}^2.
\end{align*}
\end{assumption}

We define $M = \max_i V_{1,i}$ and $M_G = \max_i (V_{2,i}+1)$.

\begin{theorem} \label{thm:stoc}
Consider the AdaGrad algorithm in~\eqref{eqn:x_st}-\eqref{eqn:y_st} with initialization $y_0 = 0_d$ and the parameter $\epsilon \in (0,1)$. If Assumptions~\ref{assump_1}-\ref{assump_5} hold, then there exists $\omega \in (0,\infty)$ such that the following statements are true.
\begin{enumerate}[(i)]
    \item $\exists \overline{h} > 0$ such that if the step size $0 < h < \overline{h}$, then $\condexp{f(x_{k+1})} - f(x_k) \leq \rho (f(x_k) - f_*) + \omega, $ for all $k \geq 0$, where $\rho \in (0,1)$ and $\omega = \bigO(M)$.
    \item $\lim_{k \to \infty} \norm{\nabla f(x_k)}^2 > d \frac{L M h}{2 -L M_G h}$.
    \item Given arbitrary choices of the initial $x_0 \in \R^d$,
    \begin{align*}
    \lim_{k \to \infty} \totexp{f(x_{k+1})} \leq \frac{\omega}{1-\rho}.
\end{align*}
\end{enumerate}
\end{theorem}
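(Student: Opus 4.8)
The plan is to mirror the three-case analysis of Theorem~\ref{thm:adagrad}, replacing the deterministic quantities by their conditional expectations $\condexp{\cdot}$. First I would start from the smoothness bound~\eqref{eqn:smooth}, substitute the stochastic update~\eqref{eqn:x_st}, and take $\condexp{\cdot}$ of both sides. The decisive simplification is that the denominator $\sqrt{y_{k,i}}+\epsilon$ is measurable given the current state $(x_k,y_k)$ --- because~\eqref{eqn:x_st} divides by $y_{k,i}$ rather than $y_{k+1,i}$ --- so it factors out of the conditional expectation. Then Assumption~\ref{assump_4} gives $\condexp{g_{i,\zeta_k}(k)}=\nabla_i f(x_k)$, turning the first-order term into $-h\,\mnorm{\nabla_i f(x_k)}^2/(\sqrt{y_{k,i}}+\epsilon)$, exactly as in the deterministic proof.

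Next I would treat the quadratic term via the variance decomposition $\condexp{\mnorm{g_{i,\zeta_k}(k)}^2}=\var{g_{i,\zeta_k}(k)}+\mnorm{\nabla_i f(x_k)}^2$ together with Assumption~\ref{assump_5}, so that $\condexp{\mnorm{g_{i,\zeta_k}(k)}^2}\le V_{1,i}+(V_{2,i}+1)\mnorm{\nabla_i f(x_k)}^2$. Substituting and collecting terms produces, on the right-hand side, a negative \emph{signal} term proportional to $\mnorm{\nabla_i f(x_k)}^2$ and an additive \emph{noise} term proportional to $V_{1,i}$; using $M=\max_i V_{1,i}$ and $M_G=\max_i(V_{2,i}+1)$, the latter is bounded by a constant of order $M$ (via $\sum_i V_{1,i}\le dM$). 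As in Theorem~\ref{thm:adagrad} I would then split into Case~I ($y_{k,i}>(1-\epsilon)^2$ eventually), Case~II ($y_{k,i}\le(1-\epsilon)^2$ always, where $\sqrt{y_{k,i}}+\epsilon\le 1$ and $(\sqrt{y_{k,i}}+\epsilon)^{-2}\le\epsilon^{-2}$), and the general Case~III obtained by taking the worse of the two rates.

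In each case I would extract the recursion $\condexp{f(x_{k+1})}-f_*\le\rho\,(f(x_k)-f_*)+\omega$: choose $h$ small enough (e.g. $h<2/(LM_G)$ in Case~I and $h<2\epsilon^2/(LM_G)$ in Case~II) that the signal coefficient $1-\tfrac{Lh}{2}M_G$ (resp. its $\epsilon^2$ variant) is positive, apply the PL inequality $\norm{\nabla f(x_k)}^2\ge 2l(f(x_k)-f_*)$ to the signal term, and absorb the remainder into $\omega=\bigO(M)$. Part~(ii) then follows from the same descent inequality by identifying the gradient magnitude at which the guaranteed negative signal exactly balances the positive noise: setting $-h(1-\tfrac{Lh}{2}M_G)\norm{\nabla f(x_k)}^2+\tfrac{Lh^2dM}{2}$ to zero yields the floor $\norm{\nabla f(x_k)}^2=d\tfrac{LMh}{2-LM_Gh}$, below which a one-step expected decrease can no longer be guaranteed. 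Part~(iii) follows by taking total expectation in the recursion of part~(i), giving $\totexp{f(x_{k+1})}-f_*\le\rho(\totexp{f(x_k)}-f_*)+\omega$, and iterating this linear recurrence so that the geometric series sums to $\omega/(1-\rho)$ in the limit.

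The hard part will be controlling the denominator $\sqrt{y_{k,i}}+\epsilon$ in Case~I so that $\rho$ is a genuine constant in $(0,1)$. In the deterministic proof, boundedness of this denominator followed from $\nabla_i f(x_k)\to 0$, which made the increments to $y_{k,i}$ summable; in the stochastic setting this argument breaks, because the persistent noise floor $V_{1,i}$ keeps $y_{k,i}$ increasing even near the minimizer, so the telescoped weighted-gradient sum no longer converges. This is precisely the mechanism that forces convergence only to a neighborhood and produces the nonzero gradient floor of part~(ii). Resolving it requires either a uniform upper bound on $\sqrt{y_{k,i}}+\epsilon$ over the descent phase, or a direct argument relating $\sum_i \mnorm{\nabla_i f(x_k)}^2/(\sqrt{y_{k,i}}+\epsilon)$ to $(f(x_k)-f_*)$ through PL without discarding the denominator, while keeping the resulting rate $\rho$ independent of $k$.
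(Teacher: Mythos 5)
Your proposal follows the paper's proof essentially step for step: conditioning out the measurable denominator $\sqrt{y_{k,i}}+\epsilon$, applying Assumptions~\ref{assump_4}--\ref{assump_5} to split signal from an $\bigO(M)$ noise term, the same Case-I/Case-II/Case-III structure with the same stepsize thresholds, the same balance point $d\,\tfrac{LMh}{2-LM_Gh}$ for part~(ii), and the same geometric iteration for part~(iii). The difficulty you flag at the end is resolved in the paper by the first of your two suggested routes: telescoping~\eqref{eqn:fdiffst_3} shows the weighted terms $\bigl((1-\tfrac{LM_G}{2}h)\mnorm{\nabla_i f(x_k)}^2-\tfrac{LM}{2}h\bigr)/(\sqrt{y_{k,i}}+\epsilon)$ vanish, so $y_{k,i}\to\infty$ can occur only when the gradient norm converges to the floor $dg_*$; hence in the regime $\lim_{k\to\infty}\norm{\nabla f(x_k)}^2 > dg_*$ (precisely the conditional setting of part~(ii)) the denominator admits a uniform bound $B$, which makes $\rho$ a genuine constant in $(0,1)$.
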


\begin{proof}
Consider an arbitrary iteration $k \geq 0$. Under Assumption~\ref{assump_2}, from~\eqref{eqn:smooth} we have
\begin{align*}
    f(x_{k+1}) - f(x_k) & \leq (x_{k+1} - x_k)^{\top} \nabla f(x_k) + \frac{L}{2} \norm{x_{k+1} - x_k}^2.
\end{align*}
Upon taking conditional expectation on both sides and substituting above from~\eqref{eqn:x_st},
\begin{align}
    & \condexp{f(x_{k+1})} - f(x_k) \nonumber \\
    & \leq \sum_{i=1}^d \left( \condexp{\frac{-h g_{i,\zeta_k}(k)}{\sqrt{y_{k,i}} + \epsilon}} \nabla_i f(x_k) + \frac{L h^2}{2} \condexp{\frac{\mnorm{g_{i,\zeta_k}(k)}^2}{\mnorm{\sqrt{y_{k,i}} + \epsilon}^2}}\right) \nonumber \\
    & = \sum_{i=1}^d \left( -h \frac{\mnorm{\nabla_i f(x_k)}^2}{\sqrt{y_{k,i}} + \epsilon} + \frac{L h^2}{2} \frac{\condexp{ \mnorm{g_{i,\zeta_k}(k)}^2}}{\mnorm{\sqrt{y_{k,i}} + \epsilon}^2}\right),
    \label{eqn:fdiffst_1}
\end{align}
where the last equality follows from Assumption~\ref{assump_4}.
From the definition of conditional variance of $g_{i,\zeta_k}(k)$ and Assumptions~\ref{assump_4}-\ref{assump_5},
\begin{align*}
    \condexp{\mnorm{g_{i,\zeta_k}(k)}^2} & \leq V_{1,i} + (V_{2,i} + 1) \mnorm{\nabla_i f(x_k}^2 \\
    & \leq M + M_G \mnorm{\nabla_i f(x_k)}^2.
\end{align*}
Upon substituting from above in~\eqref{eqn:fdiffst_1},
\begin{align}
    & \condexp{f(x_{k+1})} - f(x_k) \nonumber \\
    & \leq \sum_{i=1}^d \left( -h \frac{\mnorm{\nabla_i f(x_k)}^2}{\sqrt{y_{k,i}} + \epsilon} + \frac{L h^2}{2} \frac{M + M_G \mnorm{\nabla_i f(x_k)}^2}{\mnorm{\sqrt{y_{k,i}} + \epsilon}^2}\right). \label{eqn:fdiffst_2}
\end{align}

\noindent {\bf Case-I}: First, we consider the case when $y_{k,i} > (1-\epsilon)^2$ for all $i \in \{1,\ldots,d\}$ for all $k \geq T$, where $T < \infty$. Consider any iteration $k \geq T$. Then, $\mnorm{\sqrt{y_{k,i}} + \epsilon}^2 > \sqrt{y_{k,i}} + \epsilon$. From~\eqref{eqn:fdiffst_2}, then we have
\begin{align}
    & \condexp{f(x_{k+1})} - f(x_k) \nonumber \\
    & \leq -h \sum_{i=1}^d \frac{(1-\frac{L M_G}{2} h) \mnorm{\nabla_i f(x_k)}^2 - \frac{LM}{2} h}{\sqrt{y_{k,i}} + \epsilon}. \label{eqn:fdiffst_3}
\end{align}
For $h < \frac{2}{L M_G}$, we have $(1 - \frac{L M_G}{2} h) > 0$. 
Now, except at the trivial point $\nabla f(x_k) = 0_d$, there exists at least one $j \in \{1,\ldots,d\}$ for which $\nabla_j f(x_t) \neq 0$. We consider the non-empty set $I = \{i \, | i\in \{1,\ldots,d\}, \nabla_i f(x_k) \neq 0 \}$ and its complement $I'$. Then, we can rewrite the R.H.S. above as
\begin{align*}
    & \sum_{i=1}^d \frac{(1-\frac{L M_G}{2} h) \mnorm{\nabla_i f(x_k)}^2 - \frac{LM}{2} h}{\sqrt{y_{k,i}} + \epsilon} \\
    & = \sum_{i \in I} \frac{(1-\frac{L M_G}{2} h) \mnorm{\nabla_i f(x_k)}^2 - \frac{LM}{2} h}{\sqrt{y_{k,i}} + \epsilon} + \sum_{j \in I'} \frac{- \frac{LM}{2} h}{\sqrt{y_{k,j}} + \epsilon}.
\end{align*}
Upon rearranging the terms above, we have
\begin{align}
    & \sum_{i=1}^d \frac{(1-\frac{L M_G}{2} h) \mnorm{\nabla_i f(x_k)}^2 - \frac{LM}{2} h}{\sqrt{y_{k,i}} + \epsilon} > 0 \nonumber \\
    & \iff h < \frac{2}{L} \frac{\sum_{i \in I} \frac{\mnorm{\nabla_i f(x_k)}^2}{\sqrt{y_{k,i}} + \epsilon}}{\sum_{i \in I} \frac{M_G \mnorm{\nabla_i f(x_k)}^2 + M}{\sqrt{y_{k,i}} + \epsilon} + \sum_{j \in I'} \frac{M}{\sqrt{y_{k,j}} + \epsilon}}. \label{eqn:hst_1}
\end{align}
Since $\sum_{j \in I'} \frac{M}{\sqrt{y_{k,j}} + \epsilon} < \infty$, both the numerator and denominator in the R.H.S. of~\eqref{eqn:hst_1} are $\bigO \left(\sum_{i \in I} \frac{\mnorm{\nabla_i f(x_k)}^2}{\sqrt{y_{k,i}} + \epsilon}\right)$. So, the R.H.S. of~\eqref{eqn:hst_1} is positive. Then, $h > 0$ can be chosen to satisfy~\eqref{eqn:hst_1}, for which 
\begin{align}
    \sum_{i=1}^d \frac{(1-\frac{L M_G}{2} h) \mnorm{\nabla_i f(x_k)}^2 - \frac{LM}{2} h}{\sqrt{y_{k,i}} + \epsilon} > 0. \label{eqn:rhs_h}
\end{align}
Then,~\eqref{eqn:fdiffst_3} implies that $\condexp{f(x_{k+1})} < f(x_k)$. Since $\{\condexp{f(x_{k+1})}: k\geq T\}$ is a decreasing sequence and, under Assumption~\ref{assump_1}, bounded from below by $f_*$, the sequence converges with $\lim_{k \to \infty} \condexp{f(x_{k+1})} < \infty$. Next, upon summation from $t=T$ to $t=k$ on both sides of~\eqref{eqn:fdiffst_3}, due to telescopic cancellation, we obtain the total expectation
\begin{align*}
    & \totexp{f(x_{k+1})} - f(x_T) \\
    & \leq - h \sum_{t=T}^k \sum_{i=1}^d \frac{(1-\frac{L M_G}{2} h) \mnorm{\nabla_i f(x_t)}^2 - \frac{LM}{2} h}{\sqrt{y_{t,i}} + \epsilon}.
\end{align*}
Combining with~\eqref{eqn:rhs_h}, we have
\begin{align*}
    & 0 < \lim_{k \to \infty} \sum_{t=T}^k \sum_{i=1}^d \frac{(1-\frac{L M_G}{2} h) \mnorm{\nabla_i f(x_t)}^2 - \frac{LM}{2} h}{\sqrt{y_{t,i}} + \epsilon} \\
    & \leq \lim_{k \to \infty} (f(x_T) - \totexp{f(x_{k+1})}) \leq f(x_T) - f_*. 
\end{align*}
Following the argument as in the proof of Theorem~\ref{thm:adagrad}, the above implies that
\begin{align}
    \lim_{k \to \infty} \frac{(1-\frac{L M_G}{2} h) \mnorm{\nabla_i f(x_k)}^2 - \frac{LM}{2} h}{\sqrt{y_{k,i}} + \epsilon} = 0, \, \forall i. \label{eqn:seq_st}
\end{align}
So, either $\lim_{k \to \infty} (1-\frac{L M_G}{2} h) \mnorm{\nabla_i f(x_k)}^2 - \frac{LM}{2} h = 0$ or $\lim_{k \to \infty} \sqrt{y_{k,i}} = \infty$. Consider the case $\lim_{k \to \infty} \sqrt{y_{k,i}} = \infty$. From~\eqref{eqn:seq_st}, we have $(1-\frac{L M_G}{2} h) \mnorm{\nabla_i f(x_k)}^2 - \frac{LM}{2} h = \smallO(\sqrt{y_{k,i}} + \epsilon)$, which implies $\frac{\sqrt{\mnorm{(1-\frac{L M_G}{2} h) \mnorm{\nabla_i f(x_k)}^2 - \frac{LM}{2} h}}}{\sqrt{y_{k,i}} + \epsilon} = \smallO((\sqrt{y_{k,i}} + \epsilon)^{-0.5})$. So, if $\lim_{k \to \infty} \sqrt{y_{k,i}} = \infty$, we have $\lim_{k \to \infty} \frac{\sqrt{\mnorm{(1-\frac{L M_G}{2} h) \mnorm{\nabla_i f(x_k)}^2 - \frac{LM}{2} h}}}{\sqrt{y_{k,i}} + \epsilon} = 0$. Following the argument after~\eqref{eqn:seq_adagrad} in the proof of Theorem~\ref{thm:adagrad}, $\lim_{k \to \infty} \sqrt{y_{k,i}} = \infty$ implies that $\lim_{k \to \infty} (1-\frac{L M_G}{2} h) \mnorm{\nabla_i f(x_k)}^2 - \frac{LM}{2} h = 0$. So, in either of the cases above, we have 
\begin{align}
    \lim_{k \to \infty} \mnorm{\nabla_i f(x_k)}^2 = \frac{\frac{LM}{2} h}{(1-\frac{L M_G}{2} h)} =: g_*, \, \forall i. \label{eqn:grad_star}
\end{align}
From the above argument, it further follows that $\lim_{k \to \infty} \sqrt{y_{k,i}} = \infty$ is possible only if $\lim_{k \to \infty} \norm{\nabla f(x_k)}^2 = d g_*$. For $\lim_{k \to \infty} \mnorm{\nabla f(x_k)}^2 > d g_*$, then $\sqrt{y_{k,i}} + \epsilon$ is bounded. Then, for $0 < h < \frac{2}{L M_G}$, $h$ satisfying~\eqref{eqn:rhs_h}, and for $\lim_{k \to \infty} \mnorm{\nabla f(x_k)}^2 > d g_*$, $\exists M \in (0,\infty)$ such that $\sqrt{y_{k,i}} + \epsilon \leq B \, \forall i$. Then, from~\eqref{eqn:fdiffst_3},
\begin{align*}
    & \condexp{f(x_{k+1})} - f(x_k) \\
    & \leq - \frac{h}{B} (1 - \frac{L M_G}{2} h ) \norm{\nabla f(x_k)}^2 + \frac{L M d}{2B} h^2.
\end{align*}
Under Assumption~\ref{assump_3}, from above we have
\begin{align*}
    & \condexp{f(x_{k+1})} - f(x_k) \\
    & \leq - \frac{h}{B} \left(1 - \frac{L M_G}{2} h\right) 2l (f(x_k) - f_*) + \frac{L M d}{2B} h^2.
\end{align*}
Upon defining $c_1 = \frac{h}{B} \left(1 - \frac{L M_G}{2} h\right) 2l$ and $\omega = \frac{L M d}{2B} h^2$, we rewrite the above as
\begin{align}
    \condexp{f(x_{k+1})} - f_* & \leq (1 - c_1) (f(x_k) - f_*) + \omega, \, \forall k \geq T. \label{eqn:conv_st_1}
\end{align}
Moreover, $0 < h < \frac{2}{L M_G}$ implies that $c_1 > 0$ and $0 < h < \frac{B}{2l}$ implies that $c_1 < 1$. So, $(1-c_1) \in (0,1)$ for $0 < h < \min\{\frac{2}{L M_G}, \frac{B}{2l}\}$.

\noindent {\bf Case-II}: Next, we consider the case when $y_{k,i} \leq (1-\epsilon)^2$ for all $i \in \{1,\ldots,d\}$ for all $k \geq 0$. Then, $\sqrt{y_{k,i}} + \epsilon \leq 1$. Also, $y_0 = 0_d$ and~\eqref{eqn:y_st} implies that $y_{k,i} \geq 0 \, \forall k,i$.
From~\eqref{eqn:fdiffst_2}, then we have
\begin{align*}
    & \condexp{f(x_{k+1})} - f(x_k) \\
    & = - h \left(1 - \frac{L M_G h}{2\epsilon^2}\right) \norm{\nabla f(x_k)^2} + \frac{L M d h^2}{2 \epsilon^2}.
\end{align*}
Under Assumption~\ref{assump_3}, from above we have
\begin{align*}
    & \condexp{f(x_{k+1})} - f(x_k) \\
    & \leq - h \left(1 - \frac{L M_G h}{2\epsilon^2}\right) 2l (f(x_k) - f_*) + \frac{L M d h^2}{2 \epsilon^2}.
\end{align*}
Upon defining $c_2 = h \left(1 - \frac{L M_G h}{2 \epsilon^2}\right) 2l$, we obtain that
\begin{align}
    \condexp{f(x_{k+1})} - f_* & \leq (1 - c_2) (f(x_k) - f_*) + \omega. \label{eqn:conv_st_2}
\end{align}
Moreover, $(1-c_2) \in (0,1)$ for $0 < h < \min\left\{\frac{2\epsilon^2}{L M_G}, \frac{1}{2l}\right\}$. 

Following the same argument as in the proof of Theorem~\ref{thm:adagrad},
for $0 < h < \min\{\frac{2 \epsilon^2}{L M_G}, \frac{M}{2l}, \frac{1}{2l}\}$, $h$ satisfying~\eqref{eqn:rhs_h}, and for $\lim_{k \to \infty} \norm{\nabla f(x_k)}^2 > d g_*$,
\begin{align*}
    & \condexp{f(x_{k+1})} - f(x_k) \leq \rho (f(x_k) - f_*) + \omega, \, \forall k \geq 0,
\end{align*}
where $\rho = \max\{(1 - c_1),(1-c_2)\}$. Since $\rho \in (0,1)$, upon iterating the above from $k$ to $0$, by the law of total expectation, $\lim_{k \to \infty} \totexp{f(x_{k+1})} \leq \frac{\omega}{1-\rho}$. The proof is complete.
\end{proof}

According to Theorem~\ref{thm:stoc}, AdaGrad in~\eqref{eqn:x_st}-\eqref{eqn:y_st} converges {\em linearly} in expectation to a neighborhood of minima $f_*$, for small enough stepsize $h$. The neighborhood of $f_*$ to which $\totexp{f(x_{k+1}}$ converges is $\bigO(M)$, i.e., proportional to the variance of stochastic gradients evaluated at the minimum point. Furthermore, the gradient-norm $\norm{\nabla f(x_k)}^2$ converges to a limit greater than a value of $\bigO(M)$. 

\begin{remark}
    Following the steps in Theorem~\ref{thm:adam} and Theorem~\ref{thm:stoc}, {\em linear} convergence in expectation to a neighborhood of minima $f_*$ can be proved for Adam with stochastic gradients, i.e., $\lim_{k \to \infty} \totexp{f(x_{k+1})} \leq \frac{\omega}{1-\rho}$ and $\lim_{k \to \infty} \norm{\nabla f(x_k)}^2 > 0$. It implies that Adam with stochastic gradients not necessarily converges to the minima. This implication of our analysis is consistent with the result in~\cite{reddi2019convergence} that Adam has non-zero regret at $k \to \infty$ for some online optimization problems.
\end{remark}
\section{Summary}

We presented a framework that proves {\em linear} convergence of two adaptive gradient methods, namely AdaGrad and Adam in discrete-time, for minimizing smooth objective functions that satisfy the PL inequality. Among the prior works on adaptive gradient methods, only the AdaGrad-Norm algorithm and a continuous-time version of Adam have provable {\em linear} convergence, for a class of optimization problems. Thus, our work contributes towards reducing the theoretical gap between vanilla gradient-descent and the more successful adaptive gradient optimizers. The unifying approach in our framework could be applicable in rigorously analyzing other adaptive gradient optimizers.






\bibliography{mybibfile}

\end{document}